\documentclass[sigconf]{acmart}
\AtBeginDocument{%
  }

\usepackage{hyperref}
\usepackage{url}
\usepackage[dvipsnames]{xcolor}
\usepackage{xspace}
\usepackage{amsmath, amsthm}
\usepackage{enumitem}
\usepackage[capitalize,noabbrev,nameinlink]{cleveref}
\usepackage{graphicx}
\usepackage[table]{xcolor}
\usepackage{makecell}
\usepackage{multirow}
\usepackage{wrapfig}
\usepackage{adjustbox}
\usepackage{rotating}
\usepackage{lscape}
\usepackage{algpseudocode}
\usepackage{algorithm2e}
\usepackage{subcaption}
\usepackage{pifont}
\usepackage{booktabs}

\usepackage{amsmath,amsfonts,bm}

\def\eqref#1{equation~\ref{#1}}

\def\1{\bm{1}}

\def\vt{{\bm{t}}}

\def\vw{{\bm{w}}}
\def\vx{{\bm{x}}}

\def\vz{{\bm{z}}}

\def\mA{{\bm{A}}}

\def\mD{{\bm{D}}}

\def\mI{{\bm{I}}}

\def\mL{{\bm{L}}}

\def\mP{{\bm{P}}}

\def\mS{{\bm{S}}}
\def\mT{{\bm{T}}}

\def\mW{{\bm{W}}}
\def\mX{{\bm{X}}}

\def\mZ{{\bm{Z}}}

\DeclareMathAlphabet{\mathsfit}{\encodingdefault}{\sfdefault}{m}{sl}
\SetMathAlphabet{\mathsfit}{bold}{\encodingdefault}{\sfdefault}{bx}{n}

\def\gG{{\mathcal{G}}}

\newcommand{\E}{\mathbb{E}}

\newcommand{\R}{\mathbb{R}}

\newcommand{\mypar}[1]{\textbf{#1}\hspace{4pt}}

\usepackage{acronym}
\acrodef{GNN}[GNN]{Graph neural network}
\acrodef{GCN}[GCN]{graph convolutional network}
\acrodef{MLP}[MLP]{multi-layer perceptron}
\acrodef{GFM}[GFM]{graph foundation model}
\acrodef{GT}[GT]{graph transformer}
\acrodef{NLP}[NLP]{natural language processing}
\acrodef{LLM}[LLM]{large language model}
\acrodef{CV}[CV]{computer vision}
\acrodefplural{MI}[MI]{Mutual information}
\acrodefplural{KL}[KL]{Kullback-Leibler}
\acrodef{RNN}[RNN]{recurrent neural network}
\acrodefplural{SSL}[SSL]{self-supervised learning}
\acrodefplural{GRL}[GRL]{graph representation learning}

\newcommand{\jm}[1]{\textcolor{black}{#1}}
\newcommand{\dooho}[1]{\textcolor{black}{#1}}

\setcopyright{acmlicensed}
\copyrightyear{2026}
\acmYear{2026}
\setcopyright{cc}
\setcctype{by-nc-nd}
\acmConference[KDD 2026] {Proceedings of the 32nd ACM SIGKDD Conference on Knowledge Discovery and Data Mining V.2}{August 9--13, 2026}{Jeju Island, Republic of Korea.}
\acmBooktitle{Proceedings of the 32nd ACM SIGKDD Conference on Knowledge Discovery and Data Mining V.2 (KDD 2026), August 9--13, 2026, Jeju Island, Republic of Korea}
\acmISBN{979-8-4007-2259-2/2026/08}
\acmDOI{10.1145/3770855.3817828}
\begin{document}

\title{Node4All: Learning Node Representation Beyond Datasets}

\author{Dooho Lee}
\affiliation{%
  \institution{KAIST}
  \city{Daejeon}
  \country{Republic of Korea}
}
\email{dooho@kaist.ac.kr}

\author{Jaemin Yoo}
\affiliation{%
  \institution{Seoul National University}
  \city{Seoul}
  \country{Republic of Korea}
}
\email{jaeminyoo@snu.ac.kr}


\begin{abstract}
Node representation learning has advanced rapidly, yet most existing methods rely on per-dataset training and hyperparameter tuning.
This dataset-specific optimization comes from the difficulty of designing reusable graph models that generalize across diverse graph datasets.
In this work, we introduce Node4All, a node representation learner \emph{applicable to arbitrary graph datasets without any dataset-specific optimization}.

Node4All is built on two complementary ideas.
At the architectural level, we introduce the Channel Graph Transformer (CGT), which enables a single fixed parameterization to process arbitrary graph datasets.
At the learning level, we propose a self-supervised learning based on a series of synthetic graphs. 
Together, these components enable generalization beyond individual datasets, which is infeasible with existing architectures and learning frameworks.
We extensively evaluate Node4All on node classification across 25 benchmarks against 21 baselines, covering both supervised and self-supervised methods. 
Despite all baselines being trained and optimized for each dataset, a single Node4All, applied uniformly across the datasets, achieves a competitive ranking of 5th among 21 baselines.
Moreover, Node4All supports one-shot and in-context learning with an appropriate predictor and outperforms recent graph foundation models (GFMs) in these settings. 
These results demonstrate that Node4All not only achieves reusability across arbitrary graph datasets, but also remains an effective solution in practice.
Code and model checkpoints are available in \href{https://github.com/dooho00/node4all}{\textcolor{blue}{{this link}}}.
\end{abstract}

\begin{CCSXML}
<ccs2012>
   <concept>
       <concept_id>10010147.10010257.10010293.10010319</concept_id>
       <concept_desc>Computing methodologies~Learning latent representations</concept_desc>
       <concept_significance>500</concept_significance>
       </concept>
   <concept>
       <concept_id>10010147.10010257.10010258.10010260.10010271</concept_id>
       <concept_desc>Computing methodologies~Dimensionality reduction and manifold learning</concept_desc>
       <concept_significance>500</concept_significance>
       </concept>
 </ccs2012>
\end{CCSXML}

\ccsdesc[500]{Computing methodologies~Learning latent representations}
\ccsdesc[500]{Computing methodologies~Dimensionality reduction and manifold learning}
\keywords{Graph Representation Learning; Channel Graph; Synthetic Graph Generation; Self-supervised Learning; Graph Foundation Models}

\maketitle
\section{Introduction}
\label{sec:introduction}


\jm{A graph is a data structure consisting of a set of nodes $V$ and a set of edges $E$ that connect pairs of nodes.
Graphs provide a natural way to describe many real-world systems by representing entities and their relationships, such as social networks, molecules, knowledge graphs, and biological interactions \cite{snapnets, hu2020open, wang2017knowledge}.}
At the same time, this flexibility makes graphs difficult to process directly: different graphs can have different numbers of nodes $N = |V|$, and their connectivity patterns in $E$ can vary widely, complicating the design of a single processing framework that works across graphs.

\begin{figure}
\centering
\includegraphics[width=\linewidth]{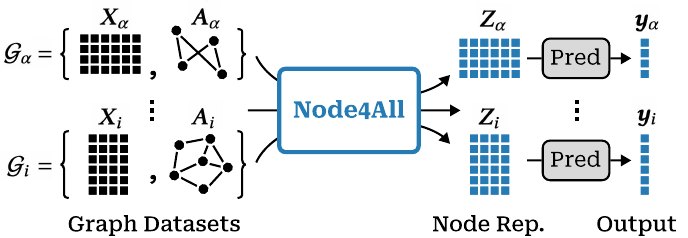}
    \vspace*{-5mm}
    \caption{\dooho{Overview. Node4All is a pretrained node representation learner that generalizes across arbitrary graph datasets.}}
    \Description{Node4All}
    \label{fig:overview}
    \vspace*{-5mm}
\end{figure}

Graph representation learning (GRL) aims to address this challenge by mapping a graph to a set of compact node representations \(\{\vz_i\}_{i=1}^N\) in a shared latent space $\vz_i\in\mathbb{R}^H$ where $H$ denotes the representation dimensionality~\cite{chen2020graph, ju2024comprehensive, khoshraftar2024survey}. 
These representations provide a standardized interface of graph-structured data and are commonly used as inputs to downstream predictors. 
As their quality directly influences downstream performance, learning node representations that effectively capture both structural and feature information has remained a central research problem for over a decade.

Early approaches fall into two main paradigms. The first factorizes proximity matrices derived from graph connectivity, including Laplacian Eigenmaps~\cite{anderson1985eigenvalues}, LLE~\cite{roweis2000nonlinear}, and HOPE~\cite{ou2016asymmetric}. The second learns node embeddings from random-walk co-occurrences using skip-gram--style objectives, including DeepWalk~\cite{perozzi2014deepwalk}, LINE~\cite{tang2015line}, and node2vec~\cite{grover2016node2vec}. 
While both paradigms are simple and effective, these methods learn embeddings tied to node identities and therefore require full re-optimization when new nodes are introduced, making the learned representations \emph{node-bound}.

\acp{GNN} address this limitation by learning shared message-passing functions that compute node representations from node features and local graph structure, rather than from node identities~\cite{defferrard2016convolutional, hamilton2017inductive, xupowerful, velikovi2018graph, wu2019simplifying, brody2021attentive}.
This design enables inductive application to unseen nodes and, with advanced architectures and tuned hyperparameters, \acp{GNN} achieve strong empirical performance across diverse benchmarks~\cite{luo2024classic, luocan, lee2025aggregation}.
However, \acp{GNN} are typically trained in supervised settings tied to a specific label space, making the learned representations inherently \emph{label-bound} and necessitating re-optimization for each new label set.

Recently, \acp{SSL} on graphs has been actively explored.
SSL-based methods remove the reliance on labels by learning representations through training objectives defined directly on the input graph.
Contrastive methods encourage agreement between representations obtained from different augmented views of the same node or graph, such as BGRL~\cite{thakoor2021bootstrapped}, MVGRL~\cite{hassani2020contrastive}, and CCA-SSG~\cite{zhang2021canonical}.
Generative approaches learn representations by reconstructing masked or perturbed graph structures and node features, including GAE~\cite{kipf2016variational}, GraphMAE~\cite{hou2022graphmae}, and MaskGAE~\cite{li2023s}.
The resulting representations are reusable across label spaces and often outperform supervised \acp{GNN}.

Despite these significant advances, most graph representation learning methods, from early random-walk--based approaches to recent \acp{SSL} methods, remain largely \emph{dataset-bound}. 
They are trained separately on each target dataset and require dataset-specific architectural choices and hyperparameter tuning to achieve strong performance. 
Furthermore, models trained on one dataset are not typically applicable to graphs with different feature specifications or topologies. 
As a result, from a user’s perspective, graph learning requires repeating training and tuning for every new dataset, a process that is time-consuming, expertise-intensive, and often requiring labeled data for reliable validation.

In this work, we propose \emph{Node4All}, a single node representation learner that can be applied to arbitrary graph datasets, as illustrated in \Cref{fig:overview}.
Node4All imposes no restrictions on graph topology or feature specification and eliminates the need for dataset-specific optimization, including both training and hyperparameter tuning.
With Node4All, users can obtain reusable node representations via a single inference.
Node4All is composed of two key components that address complementary aspects of the problem:
\begin{enumerate}[label=\textbf{\arabic*.}, leftmargin=1.5em]
\item \textbf{Architecture.} We propose \emph{Channel Graph Transformer (CGT)}, a novel architecture that can process arbitrary graphs under a single fixed parameterization without any specific tuning.
\item \textbf{Learning.} We propose a self-supervised learning approach on synthetic graphs, which are carefully designed to mimic massive real-world graphs, to enable generalization beyond datasets.
\end{enumerate}
\noindent
We evaluate Node4All on node classification across 25 benchmarks spanning diverse feature specifications and structural properties.
We compare against 21 baselines, including both supervised and self-supervised methods, each extensively optimized for individual datasets.
Although Node4All is \emph{(i)} trained without observing any real-world datasets and \emph{(ii)} applied uniformly across all benchmarks without any modification, 
it achieves a competitive ranking of 5th among 21 baselines. 
Moreover, our experiments further show that with an appropriate predictor, Node4All supports one-shot and in-context learning, outperforming recent graph foundation models (GFMs) in these settings.
These results demonstrate that Node4All not only achieves reusability across arbitrary graph datasets, but also generalizes well and remains effective in practice.
\section{Preliminaries}

\jm{We introduce the notations and background on graph representation learning and \acp{GNN} used throughout the paper.}

\mypar{Notations.}
Let $\gG = (\mathcal{V}, \mathcal{E})$ be an undirected graph with $N = |\mathcal{V}|$ nodes. Its adjacency matrix is $\mA \in \{0,1\}^{N \times N}$, where $\mA_{ij}=1$ if and only if $(i,j)\in \mathcal{E}$. Node features are given by $\mX \in \R^{N \times F}$, 
\jm{where each row $\mX_{n,:} \in \R^F$ represents the feature vector of the $n$-th node with dimensionality $F$, and each column $\mX_{:,f} \in \R^N$ represent the $f$-th feature channel across all $N$ nodes.}
Without loss of generality, we represent the graph as $\gG=(\mX,\mA)$ throughout this work.

\mypar{Graph Representation Learning.}
Given a graph $\gG=(\mX,\mA)$ with $N$ nodes, we formalize graph representation learning as learning a function $\Psi$ that maps the graph to $H$-dimensional node representations $\{\vz_i\}_{i=1}^N$, collected as a matrix $\mZ=\Psi(\mX,\mA)\in\mathbb{R}^{N\times H}$:
\begin{equation*}
\Psi: \mathbb{R}^{N\times F} \times \{0,1\}^{N\times N} \rightarrow \mathbb{R}^{N\times H}.
\label{eq:psi}
\end{equation*}
Our goal is to learn a representation function $\Psi$ that can process graphs with arbitrary numbers of nodes ($\forall N$) and feature dimensions ($\forall F$) within a single fixed parameterization. The resulting embeddings $\mZ$ should jointly encode features and graph structure, serving as sufficient node representations for downstream tasks.

\mypar{Graph Neural Networks.}
Given a graph $\gG=(\mX,\mA)$ with $N$ nodes, a graph neural network (GNN) iteratively updates node representations $\mZ^{(l)} \in \R^{N \times F_l}$, starting from $\mZ^{(0)}=\mX$, where $F_l$ denotes the representation dimensionality at layer $l$.
Each layer can be written as a message-passing function $\phi^{(l)}$ \citep{MessagePassing, xu2018jknet, pre-trainGNN},
\begin{equation}
\mZ^{(l)}=\phi^{(l)}(\mZ^{(l-1)},\mA)=
\mathrm{Comb}^{(l)} \Big(\mathrm{Agg}^{(l)}(\mZ^{(l-1)},\mA),\,\mZ^{(l-1)}\Big),
\label{eq:mp}
\end{equation}
where $\mathrm{Agg}^{(l)}$ aggregates the neighboring representations using a matrix $\hat{\mA} \in \R^{N \times N}$ derived from adjacency matrix $\mA$, either statically~\citep{kipf2017semisupervised, hamilton2017inductive} or dynamically~\citep{velikovi2018graph, chienadaptive}, and $\mathrm{Comb}^{(l)}$ integrates the aggregated messages with the previous representations, typically via a learnable linear transformation $\mW^{(l)} \in \R^{F_{l-1} \times F_l}$.

\section{Limitations of Existing Works}
\label{subsec:limitation_arch}

We analyze three common assumptions underlying most existing graph representation learning methods, which limit their applicability and generalization beyond individual datasets.

\mypar{L1. Fixed Feature Dimensionality.}
Most existing methods instantiate the representation function $\Psi$ using \acp{GNN}. These models encode information through feature transformations parameterized by weight matrices $\mW \in \mathbb{R}^{F \times H}$, where $F$ denotes the input feature dimensionality and $H$ the hidden dimension. This design inherently couples the model parameterization to a fixed input dimensionality $F$. In practice, however, graph datasets exhibit substantial variation in feature dimensionality, ranging from only a few dimensions~\cite{platonov2023a} to several thousand~\cite{shchur2018pitfalls}.
Consequently, models trained on a specific feature dimensionality cannot be directly transferred to datasets with different dimensionalities $\tilde{F} \neq F$, preventing a single trained model from being applied across datasets.

\mypar{L2. Optimized on a Specific Dataset.}
Most graph representation learning methods are trained and optimized on a specific target dataset. This dataset-centric optimization ties the learned representations to dataset-specific regularities, including both structural patterns and feature distributions.
As a result, such models often struggle to generalize to nodes or graphs that are rare or differ from those observed during training~\cite{yehudai2021local, mao2023demystifying, li2025out}. While optimization on a target dataset is a natural and effective choice for in-dataset performance, it inherently limits the reuse of learned representations across arbitrary graph datasets.

\mypar{L3. Specialized to the Target Task.}
In supervised learning, representations are updated to separate nodes with different labels while collapsing nodes with the same label. When transferred to a different task with a new label definition, such representations may no longer align with the new semantics, leading to degraded performance. 
Although \acp{SSL} removes explicit label supervision, many methods are not fully task-independent in practice.
In particular, contrastive approaches rely on the assumption that differently augmented views share the same semantic meaning.
\jm{This assumption may not hold depending on the task and dataset,} and in practice effective augmentations are usually identified through validation performance on the target task. As a result, this indirect guidance ties the learned representations to that task, limiting their generalization to different tasks and datasets.

Together, we identify the obstacles that prevent existing methods from generalizing beyond individual datasets from three aspects of the learning framework: what they train \textbf{L1}, where they train \textbf{L2}, and how they train \textbf{L3}.
In Sections \ref{sec:processing} and \ref{sec:learning}, we introduce the components of \emph{Node4All} that address each of these challenges.

\section{Processing Arbitrary Graphs}
\label{sec:processing}

In this section, we introduce channel-wise graph processing and its realization, the \emph{Channel Graph Transformer (CGT)}, which can process graphs with arbitrary feature dimensionality under a single parameterization, thereby addressing \textbf{L1}.

\subsection{Channel-wise Graph Processing}
\label{subsec:cip}

Our key idea is to treat node features as a collection of independent feature channels rather than as a single vector. By processing each channel with a shared model, the architecture naturally supports arbitrary feature dimensionalities under a single parameterization.
To formalize this, we define a \emph{channel graph} as a graph induced by a single feature channel with shared node and edge sets.

\begin{definition}[Channel Graph]
Let $\gG=(\mX, \mA)$ be a graph whose node features consist of $F$ scalar signals
$\mX=\{\mX_{:,c}=\vx^{(c)}\in\mathbb{R}^{N}\}_{c=1}^{F}$.
The \emph{channel graph} induced by channel $c$ is defined as
\[
\gG^{(c)} = (\vx^{(c)}, \mA), \quad c=1,\dots,F.
\]
\end{definition}

\noindent
From this perspective, the graph $\gG$ can be regarded as a collection of channel graphs
$\{\gG^{(c)}\}_{c=1}^{F}$ that share the same topology.
This viewpoint naturally leads to \emph{channel-wise graph processing}, where the same model is applied uniformly to each channel graph.
\begin{definition}[Channel-wise Graph Processing]
Given a graph $\gG=\{\gG^{(c)}\}_{c=1}^{F}$,
\emph{channel-wise graph processing} defines a representation function $\Psi$ by applying a shared model $f$ to each channel graph and concatenating the results along feature dimension:
\begin{align*}
\mZ = \Psi(\mX,\mA)
&\triangleq
\big[
f(\gG^{(1)}) \;\|\; \cdots \;\|\; f(\gG^{(F)})
\big] \\
&=
\big[
\vz^{(1)} \;\|\; \cdots \;\|\; \vz^{(F)}
\big]
\in \mathbb{R}^{N\times F},
\end{align*}
where the shared model
$f:\mathbb{R}^{N}\times\{0,1\}^{N\times N}\rightarrow\mathbb{R}^{N}$
maps each channel graph $\gG^{(c)}$ to a scalar node embedding
$\vz^{(c)}\in\mathbb{R}^{N}$.
\end{definition}

\noindent
This formulation allows $f$ to process graphs with arbitrary feature dimensionality $F$, thereby addressing \textbf{L1}. We next discuss which architecture should be used to realize $f$.

\begin{figure}[t]
\centering
\includegraphics[width=\linewidth]{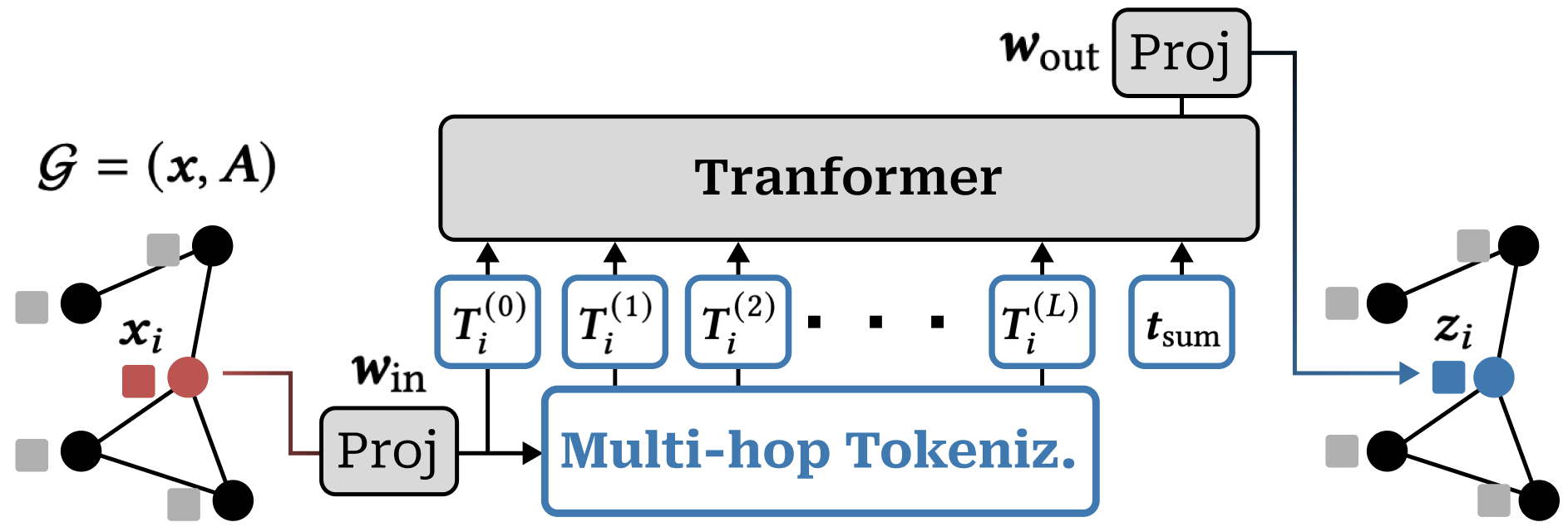}
\vspace{-4mm}
\caption{The Channel Graph Transformer (CGT) maps each channel graph to scalar node embeddings via multi-hop tokenization and Transformer-based token readout.}
\vspace{-4mm}
\Description{Illustration of CGT}
\label{fig:cgt}
\end{figure}

\subsection{Channel Graph Transformer (CGT)}
\label{sec:cgt}

A standard \ac{GNN} is suboptimal for processing channel graphs, as its representational power largely arises from parametric feature transformations that mix multiple feature channels.
This capacity disappears if a channel graph is given as input, and \acp{GNN} degenerate into repeated propagation of a one-dimensional signal. 

To address this limitation, we propose the Channel Graph Transformer (CGT), an architecture specialized for channel graphs. 
As shown in \Cref{fig:cgt}, 
CGT \emph{(i)} encodes each node in the channel graph, represented by a scalar feature, into a series of multi-hop tokens, and \emph{(ii)} uses a Transformer~\cite{vaswani2017attention} to aggregate the tokens into an updated scalar embedding.
This design provides substantially greater expressive capacity for learning on channel graphs than \acp{GNN}.

\subsubsection{Multi-hop Tokenization}
\label{subsec:cgt:tokenize}

A Transformer requires tokens with a fixed dimensionality that carry meaningful information on their own, such as image patches for computer vision~\cite{dosovitskiy2021an} or word tokens for language~\cite{vaswani2017attention}.
In graphs, however, a node feature alone does not encode structural information, since such information is defined through interactions with its neighborhood.
CGT constructs standardized and structurally informative tokens through \emph{coordinate-wise multi-hop aggregation}.

Given a channel graph $\gG=(\vx,\mA)$, we first map the scalar node signal to an initial $d$-dimensional token
\[
\mT^{(0)}=\vx\vw_{\mathrm{in}}\in\mathbb{R}^{N\times d},
\qquad
\vw_{\mathrm{in}}\in\mathbb{R}^{1\times d}.
\]
We then assign each coordinate of the token a distinct receptive-field size.
At each CGT layer $l$, the $i$-th coordinate is propagated by exactly $i$ hops over the graph, and the resulting representations are combined to form a new $d$-dimensional token:
\begin{equation*}
\mT^{(l)}=
g^{(l)}
\Big(
\big[
\hat{\mA}^{0}\mT^{(l-1)}_{:,0}\,\Vert\,
\hat{\mA}^{1}\mT^{(l-1)}_{:,1}\,\Vert\,
\cdots\,\Vert\,
\hat{\mA}^{d-1}\mT^{(l-1)}_{:,d-1}
\big]
\Big)\in\mathbb{R}^{N\times d},
\label{eq:multi_aggregation}
\end{equation*}
where $\hat{\mA}$ denotes the symmetrically normalized adjacency matrix.
The mixing function $g^{(l)}:\mathbb{R}^{N\times d}\rightarrow\mathbb{R}^{N\times d}$ is implemented using a lightweight \ac{MLP} for simplicity.

Our tokenization method ensures that each token has a consistent dimensionality $d$ while embedding rich neighborhood information of each node from local to global scales. 
Repeating this for $L$ layers yields a series of tokens ${\mT^{(0)},\mT^{(1)},\dots,\mT^{(L)}}$, which serve as the input sequence to the Transformer.

\subsubsection{Transformer-based Token Readout}
\label{subsec:cgt:readout}

CGT then aggregates the token sequence of each node into a single representation using a Transformer. To perform this aggregation, we prepend a learnable summary token $\vt_{\mathrm{sum}}\in\mathbb{R}^{d}$ to the token sequence of each node $i$:
\begin{equation*}
\mS_i^{(0)} =
\big[
\vt_{\mathrm{sum}},\;
\mT^{(0)}_i,\;
\mT^{(1)}_i,\;
\cdots,\;
\mT^{(L)}_i
\big]
\in \mathbb{R}^{(L+2)\times d},
\quad i=1,\dots,N.
\label{eq:cgt_token_seq}
\end{equation*}
We then apply an $R$-layer Transformer to this sequence. Through self-attention, it attends over the entire token set and aggregates the information into $\vt_{\mathrm{sum}}$.
We take the output representation of $\vt_{\mathrm{sum}}$ at layer $R$ and project it to a scalar embedding:
\[
f(\gG)_i = \mS_{i,0}^{(R)\top}\vw_{\mathrm{out}}\in\mathbb{R},
\qquad
\vw_{\mathrm{out}}\in\mathbb{R}^{d}.
\]
Together, CGT realizes the function $f$ required for channel-wise graph processing, while providing sufficient expressive capacity to capture structural information in the channel graph.
\section{Learning for Arbitrary Graphs}
\label{sec:learning}

While our CGT enables the processing of arbitrary graphs, learning representations that generalize across datasets remains challenging, which relates to limitations \textbf{L2} and \textbf{L3}. 
To address these limitations, we propose a new \acp{SSL} framework based on synthetic graphs.

\begin{figure}[t]
\centering
\includegraphics[width=\linewidth]{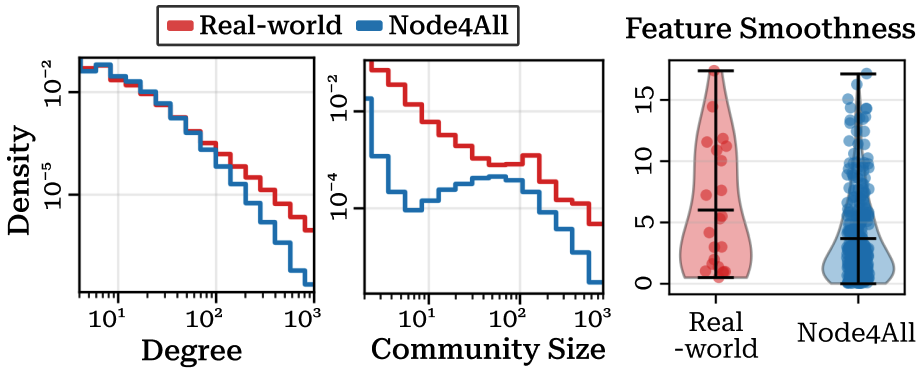}
\vspace{-6mm}
\caption{
Comparison between 25 real-world graph datasets and 300 Node4All synthetic graphs.
Degree and community-size (identified by the Louvain algorithm) distributions show similar trends between real and synthetic graphs.
Per-dataset average feature smoothness also spans a comparable range between synthetic and real-world datasets.
}
\label{fig:dataset}
\Description{Autoencoder}
\vspace{-4mm}
\end{figure}

\subsection{Synthetic Graph Generation}
\label{subsec:synthetic}


Most graph models are trained on only one or a small number of real-world datasets and are fitted to dataset-specific patterns (\textbf{L2}).
In the language and vision domains, this issue is often mitigated by training a model on massive collections of real-world datasets. 
However, achieving such large-scale coverage is difficult for graph models due to the lack of semantically aligned datasets.

To address this challenge, we train the model on synthetic graphs instead of limited real-world datasets, exposing it to a wide range of graph distributions and encouraging it to focus on structural and feature patterns that persist across datasets.
Given that many possible design choices are available for synthetic graph generation, we keep the design simple to avoid introducing unnecessary inductive biases while covering a wide range of graph distributions.


\subsubsection{Overview}
We define a synthetic graph generator as a probability distribution over graphs $\gG=(\mX,\mA)$, denoted by $p(\gG)$ and parameterized by latent variables $\vz$:
\begin{equation*}
p(\gG)
=
\int p(\vz)\,p(\mA \mid \vz)\,p(\mX \mid \mA,\vz)\,d\vz.
\label{eq:synthetic_graph_model}
\end{equation*}
A graph $\gG \sim p(\gG)$ is generated by a three-stage process:
(i) sampling latent variables $\vz$,
(ii) generating the graph structure $\mA$ conditioned on $\vz$, and
(iii) generating node features $\mX$ conditioned on both $\mA$ and $\vz$.
We next describe the structure and feature generators, as well as the latent parameters that control them.

\subsubsection{Structure Generation}
\label{subsec:structure_generation}

We generate graph structures using a Chung--Lu--style random graph model~\cite{chung2002connected}.
The structure generator is controlled by two graph-level latent variables: the expected average degree $\bar d$ and a degree bias parameter $\sigma$, which determines the degree heterogeneity across nodes.

Given the number of nodes $N$ and parameters $(\bar d,\sigma)$, we assign each node $u\in\{1,\dots,N\}$ an expected degree $w_u$ via
\[
\alpha_u \sim \mathcal{N}\!\left(-\tfrac{1}{2}\sigma^2,\sigma^2\right),
\qquad
w_u = \bar d \exp(\alpha_u).
\]
This approach preserves the expected average degree across nodes, i.e., $\mathbb{E}[w_u] = \bar d$, while introducing heterogeneity of individual nodes. 
When $\sigma\approx 0$, the graph is close to regular, whereas larger $\sigma$ yields increasingly heavy-tailed degree distributions.

Given the sampled expected degrees $\{w_u\}_{u=1}^N$, we then sample an undirected adjacency matrix $\mA \in \{0,1\}^{N\times N}$ by independently drawing each unordered node pair $1\le u<v\le N$ with
\[
A_{uv} \sim \mathrm{Bernoulli}(p_{uv}),
\quad
p_{uv} = \min\!\left(1,\frac{w_u w_v}{W}\right),
\quad
W = \sum_{t=1}^N w_t.
\]
Under this construction, the realized degree $d_u$ of each node satisfies
$\E[d_u \mid \{w\}] \approx w_u$.
Consequently, the graph-level average degree concentrates around
$\frac{1}{N}\sum_{u=1}^N d_u \approx \bar d$. 
Overall, this procedure generates diverse graph structures while controlling sparsity through $\bar d$ and degree heterogeneity through $\sigma$, yielding degree distributions similar to those of real-world graph datasets (see \Cref{fig:dataset}).

\subsubsection{Feature Generation}
\label{subsec:feature_generation}

Our feature generation pipeline is inspired by recent tabular foundation models~\cite{hollmann2022tabpfn}, but is adapted to graphs in two key ways:
(i) by explicitly controlling the intrinsic dimensionality of the feature matrix, and
(ii) by modeling correlations between node features and graph structure through propagation.

Controlling intrinsic dimensionality is motivated by the observation that many real-world datasets lie on low-dimensional semantic manifolds despite being represented in high-dimensional feature spaces.
This property is particularly important for graphs, where lower intrinsic dimensionality increases the relative contribution of structural information in downstream tasks.

To capture this effect, we introduce a rank parameter $\alpha \in (0,1]$, which determines the number of semantic factors $\bar F=\lceil \alpha F\rceil$ for a given feature dimension $F$.
We sample a semantic matrix $\mS\in\mathbb{R}^{N\times\bar F}$ and map it to the feature space via linear mixing,
\begin{equation*}
\mX=\mS\mW\in\mathbb{R}^{N\times F},
\qquad
\mW\in\mathbb{R}^{\bar F\times F}.
\end{equation*}
We initialize $\mW$ with orthonormal rows so that the effective rank~\cite{roy2007effective} of $\mX$ is approximately $\bar F$, thereby controlling intrinsic dimensionality through $\alpha$.
Feature diversity arises from sampling $\mS$ and $\mW$, while varying $\alpha$ induces diversity in intrinsic dimensionality, allowing the generator to produce a range of realistic low-rank feature structures observed in real-world datasets.

However, since these features are generated independently of the graph structure, they exhibit no inherent correlation with the graph.
In particular, as shown theoretically in Appendix~\ref{appendix:lemma}, random assignment of node features degenerates graph-aligned variations in expectation.
To counteract this effect, we apply stochastic propagation that explicitly amplifies graph-aligned variations:
\begin{equation*}
\mX \leftarrow \Bigl(\bigl(1+\beta\alpha\bigr)\mI-\alpha,\hat{\mA}\Bigr)^S \mX,
\quad
\beta\in{0,1},\ \alpha\in[0,1],
\end{equation*}
where $\hat{\mA}$ denotes the symmetrically normalized adjacency matrix.
The operator is controlled by three latent variables: the propagation depth $S$, the mixing strength $\alpha$, and a mode selector $\beta$.
When $\beta=0$, the operator reduces to $\mI - \alpha \hat{\mA}$, emphasizing differences between a node and its neighbors.
When $\beta=1$, it becomes $\mI + \alpha(\mI - \hat{\mA})$, which amplifies these differences while retaining the original node signal.
By sampling $S$, $\alpha$, and $\beta$, the propagation introduces diverse patterns of graph-aligned \jm{features.}
As a result, the generated features span a range of smoothness comparable to that observed in real-world graph datasets, as shown in \Cref{fig:dataset}.

\begin{figure}[t]
\centering
\vspace{-2mm}
\includegraphics[width=\linewidth]{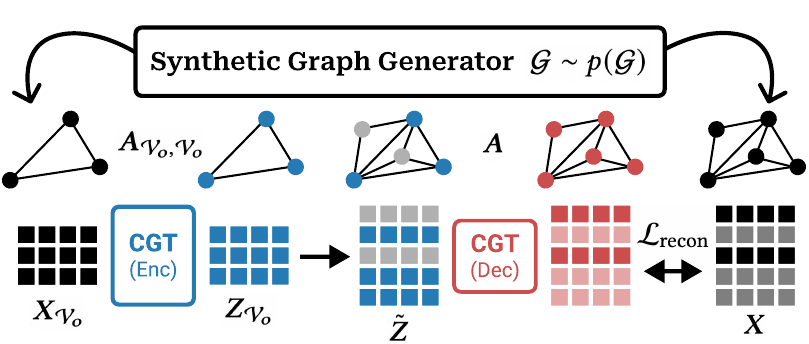}
\vspace{-6mm}
\caption{Node4All trains CGT via masked autoencoding on a stream of synthetic graphs to learn node representations that generalize to arbitrary real-world graph datasets.}
\label{fig:node4all}
\Description{Autoencoder}
\vspace{-4mm}
\end{figure}

\subsection{Masked Autoencoder with DropNode}
\label{subsec:mae_dropnode}

Having specified what to train (CGT) and where to train it (synthetic graphs), we now describe how we train it. 
We adopt \emph{masked node-feature reconstruction}, following GraphMAE~\cite{hou2022graphmae, hou2023graphmae2}, which learns representations by reconstructing masked node features.

This \jm{self-supervised objective} relies on a weak and broadly applicable assumption: node attributes are at least partially predictable from their surrounding context. 
If this assumption were violated, it would imply that nodes in the graph are largely independent of one another, calling into question whether the graph structure encodes meaningful relational information at all. 
Such scenarios are uncommon in practical graph datasets, making masked reconstruction a robust  objective that mitigates task-specific bias \jm{(\textbf{L3}).}

However, directly applying GraphMAE is not ideal in our setting. Unlike prior work that assumes a single dataset with a fixed feature space, our training data consists of multiple synthetic graphs, each with potentially different feature spaces.
Thus, replacing masked nodes with a single shared learnable token, e.g., \texttt{[MASK]}, as done in GraphMAE, can lead to inconsistent semantics across graphs and introduce misleading shortcuts during training.

We therefore adopt a DropNode-style~\cite{you2020graph, feng2020graph} masked autoencoder, which drops nodes instead of replacing them with a mask token.
Given a \jm{synthetic} graph $\gG=(\mX,\mA)$ with a node set $\mathcal{V}$, we randomly select a subset of nodes to drop, $\tilde{\mathcal{V}}\subset\mathcal{V}$, and denote the remaining nodes by $\mathcal{V}_o=\mathcal{V}\setminus\tilde{\mathcal{V}}$. The encoder then operates on the induced subgraph formed by removing dropped nodes and their incident edges, producing embeddings for the remaining nodes:
\begin{equation*}
\mZ_{\mathcal{V}_o}=f_{\text{enc}}(\mX_{\mathcal{V}_o},\, \mA_{\mathcal{V}_o,\mathcal{V}_o}),
\qquad
\tilde{\mZ}_i=
\begin{cases}
\mZ_i, & i\in\mathcal{V}_o,\\
\mathbf{0}, & i\in\tilde {\mathcal{V}}.
\end{cases}
\end{equation*}

For reconstruction, we feed the decoder the full graph structure with a node representation matrix $\tilde{\mZ}\in\mathbb{R}^{N\times D}$, which contains the encoded representations for retained nodes and the zero vectors for dropped nodes. We optimize the autoencoder \jm{$f_\text{enc}$ and $f_\text{dec}$} using a cosine reconstruction loss over the dropped nodes:
\[
\mathcal{L}_{\text{recon}}
=
\frac{1}{|\tilde {\mathcal{V}}|} {\textstyle \sum_{i\in\tilde {\mathcal{V}}}}
\Bigl(
1-\cos\bigl(f_{\text{dec}}(\tilde{\mZ},\mA)_i,\; \mX_i\bigr)
\Bigr).
\]

\mypar{Node4All.}
Finally, we introduce \emph{Node4All}, a node representation learner that integrates the three components: Channel Graph Transformer (CGT; \Cref{sec:cgt}), synthetic graph generation (\Cref{subsec:synthetic}), and masked autoencoding (\Cref{subsec:mae_dropnode}), as illustrated in \Cref{fig:node4all}. 
Node4All uses CGT as both the encoder $f_{\text{enc}}$ and decoder $f_{\text{dec}}$, enabling it to process graphs with arbitrary feature spaces. During training, a synthetic graph generator produces diverse graphs, and the model is trained via masked autoencoding by reconstructing dropped node features from their surrounding graph context. 
By repeatedly performing this reconstruction across heterogeneous graph structures and feature spaces, Node4All learns representations that capture generalizable patterns rather than dataset-specific ones, allowing node representation learning to generalize beyond individual datasets toward arbitrary graph datasets.

\section{Related Works}
Node4All is a node representation learner that shares the goal with graph representation learning methods, which we review in detail in \Cref{sec:introduction}. Here, we focus on two closely related lines of work: (1) \acp{GT}, which are architecturally related, and (2) \acp{GFM}, which similarly aim to generalize across multiple graph datasets within a single model.
 
\subsection{Graph Transformers}
\acp{GT} are a class of architectures that model graph-structured data using Transformers. Many \acp{GT} apply self-attention across nodes and augment attention with topology-aware positional encodings~\cite{dwivedi2020generalization,ying2021transformers}. While effective, their attention mechanism incurs quadratic complexity with respect to the number of nodes. To improve scalability, several \acp{GT} instead represent each node as a sequence of topology-aware tokens and apply self-attention within the sequence. For example, NAGphormer constructs tokens by aggregating features from different hop distances~\cite{chen2023nagphormer}, VCR-Graphormer builds token sequences using informative context nodes identified through personalized PageRank and virtual connections~\cite{fu2024vcrgraphormer}, and PolyFormer constructs tokens using polynomial graph filters that capture structural information at multiple scales~\cite{ma2024polyformer}.

While CGT also applies self-attention over node-specific token sequences, it differs fundamentally in how tokens are constructed. Existing methods operate on node feature vectors, where each token is derived from a rich multi-dimensional representation. In contrast, CGT operates on channel graphs, where each token originates from a scalar-valued feature channel. Applying the same aggregation operator to all token coordinates would therefore yield highly redundant representations, as every coordinate encodes the same signal at the same neighborhood scale. CGT addresses this issue through coordinate-wise multi-hop tokenization, assigning a distinct receptive field to each coordinate. Consequently, each token captures structural information from multiple neighborhood scales, yielding a richer representation of the channel signal.

\subsection{Graph Foundation Models}
We categorize existing \acp{GFM} approaches into three directions based on how broadly they generalize across datasets.

\mypar{Text-space \acp{GFM}}
The first direction unifies multiple graph datasets by operating in a common text space.
These methods leverage pretrained \acp{LLM} to embed raw text features~\cite{liu2024one, chen2025autogfm} or to make predictions directly within them~\cite{tang2024graphgpt, ye2024language, chen2024llaga}.
They have shown promising results in handling task heterogeneity, co-training on multiple datasets, transfer learning, and few-shot learning~\cite{liu2024one, chen2024llaga, he2025unigraph, fang2025uniglm}.
However, many graph datasets contain numerical or categorical attributes, and converting them into text is often infeasible or relies on dataset-specific templates~\cite{chen2024text}.

\mypar{Feature-aligned \acp{GFM}}
The second direction extends generalization to graphs with non-textual attributes.
To address feature heterogeneity across datasets (\textbf{L1}), these approaches project node features into a shared representation space via singular value decomposition with alignment modules~\cite{yu2025samgpt, zhao2024gcope} or learnable patching mechanisms~\cite{sun2025patchnet}, and have shown promising results in co-training and few-shot adaptation.
However, these methods typically require per-dataset fine-tuning when adapting to new graphs and are less effective when sufficient labels are available~\cite{sun2025patchnet, yu2025samgpt, zhao2024gcope}.

\mypar{Fully Inductive \acp{GFM}}
The third direction studies fully inductive \acp{GFM}, which operate on arbitrary graph datasets without training on the target data~\cite{zhaofully}.
GraphAny achieves full inductiveness by attending over a fixed set of least-mean-square linear predictors derived from features propagated at different scales.
TS-\acp{GNN}~\cite{finkelshtein2025equivarianceeverywhere} further generalizes this prediction framework by enforcing equivariance to node and feature permutations and invariance to labels.
NodePFN~\cite{anonymous2026learning} pursues a related goal by extending tabular foundation models, such as TabPFN~\cite{hollmann2022tabpfn, hollmann2025accurate}, to graph-structured inputs.
\dooho{
More recently, Recurrent GVT (RGVT)~\cite{lee2026viewspace} achieves cross-graph transfer by learning over a graph-filter-induced view space rather than the original node feature space.
}

\mypar{Relationship with \acp{GFM}}
Node4All differs from existing \acp{GFM} in two key aspects.
\dooho{
First, we target full inductive applicability across arbitrary graph datasets without any dataset-specific modification.
This is challenging for text-space \acp{GFM} due to their reliance on textual attributes and for feature-aligned \acp{GFM} due to their dependence on per-dataset fine-tuning.
While NodePFN and RGVT do not require training on the target dataset, they still rely on per-dataset preprocessing and recurrent depth selection, respectively.
}
Second, unlike most \acp{GFM} that conduct task-specific predictions, Node4All creates reusable representations. 
This enables (i) a single inference to produce embeddings that can be cached and reused across different label spaces, and (ii) flexible choices of downstream predictors.
In our experiments, we show that both one-shot settings of feature-aligned \acp{GFM} and in-context prediction of fully inductive \acp{GFM} can be realized, and often improved upon, by selecting an appropriate predictor on top of Node4All representations.

\begin{figure}[t]
\centering
\includegraphics[width=\linewidth]{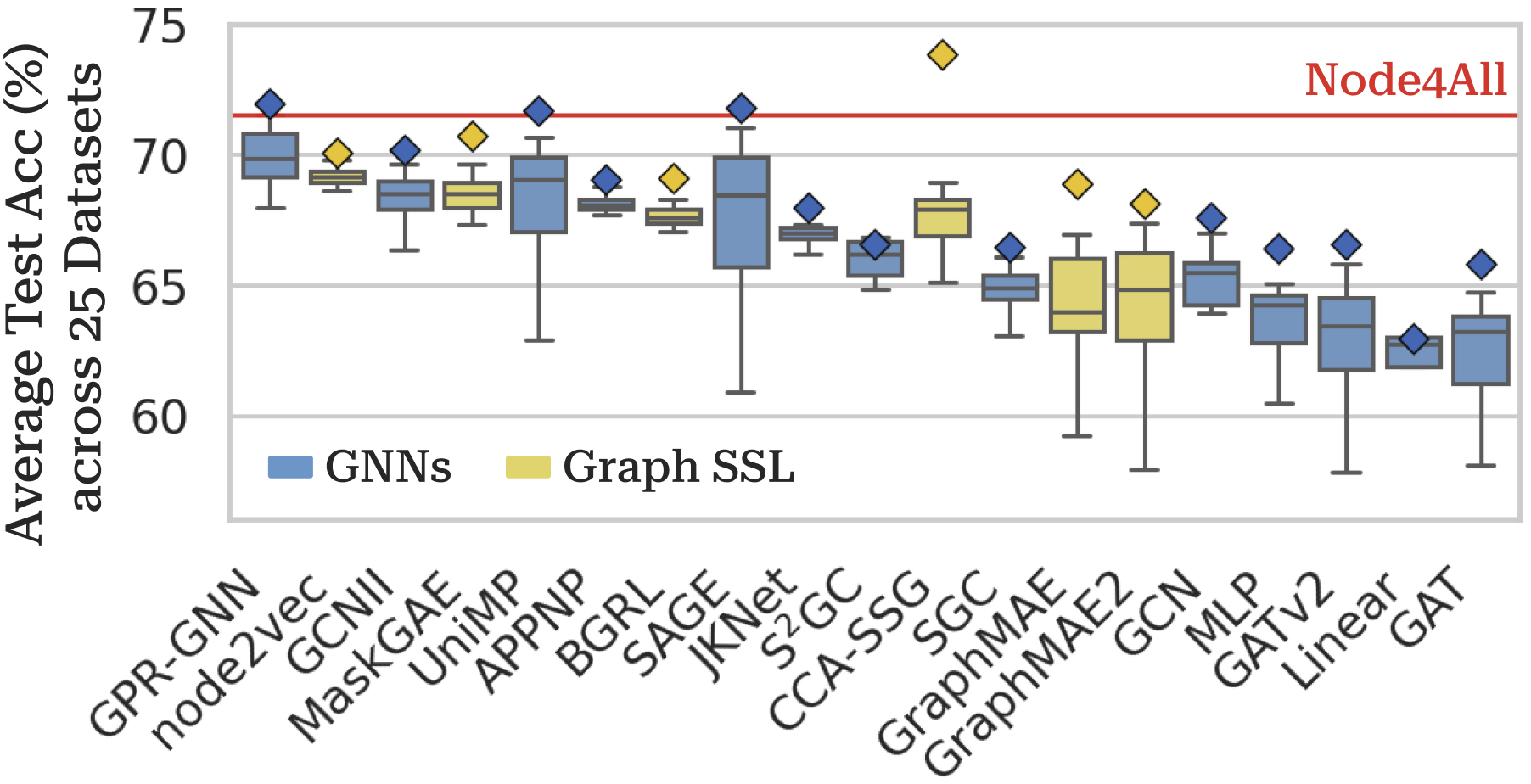}
\vspace{-6mm}
\caption{Box plots show the distribution of average test accuracies (\%) across hyperparameter configurations for each method. 
Diamonds denote the performance of each method under Tuned-HP. The red line is Node4All, which exhibits no variation since it requires no dataset-specific tuning.
}
\vspace{-2mm}
\label{fig:hp_box_plot}
\Description{Autoencoder}
\end{figure}
\section{Main Experiments}

We evaluate the practical effectiveness of Node4All through a series of experiments comparing it with existing methods across multiple settings.
Specifically, we investigate the following questions:

\begin{enumerate}[label=\textbf{Q\arabic*.}, leftmargin=2.5em,  topsep=2pt, parsep=0pt, itemsep=0pt ]
    \item Can a single fixed model (Node4All) make competitive performance compared to per-dataset optimized methods?
    \item How effective are Node4All representations in one-shot learning compared to feature-aligned \acp{GFM}?
    \item How effective are Node4All representations in in-context learning compared to fully inductive \acp{GFM}?
\end{enumerate}

\subsection{Against Per-Dataset Optimized Methods}
\label{subsec:main_exp}
Node4All offers clear practical advantages, including
\emph{(i)} applicability to arbitrary graph datasets and
\emph{(ii)} the elimination of dataset-specific training and hyperparameter tuning.
However, these benefits are only meaningful if performance remains competitive.
If too much accuracy is sacrificed, users may instead prefer the cost of per-dataset optimization.
We therefore evaluate whether a single fixed Node4All model can compete with conventional methods that are trained and optimized separately for each dataset.

\mypar{Setup.}
We compare Node4All against 21 baselines, including linear and \acp{MLP} predictors, node2vec~\cite{grover2016node2vec} \jm{with concatenated initial features,} 12 supervised \acp{GNN}, and 6 graph \acp{SSL} methods.
The supervised baselines include GCN~\cite{kipf2017semisupervised}, 
GraphSAGE~\cite{hamilton2017inductive},
APPNP~\cite{gasteiger2018combining}, 
JKNet~\cite{xu2018jknet}, 
GAT~\cite{velikovi2018graph}, 
SGC~\cite{wu2019simplifying}, 
GCNII~\cite{chen2020simple}, GATv2~\cite{brody2021attentive}, 
S$^2$GC~\cite{zhu2021simple}, UniMP~\cite{shi2021masked},  and GPR-GNN~\cite{chienadaptive}.
The self-supervised baselines include contrastive approaches (BGRL~\cite{thakoor2021bootstrapped} and CCA-SSG~\cite{zhang2021canonical}) and generative methods (GAE~\cite{kipf2016variational}, GraphMAE~\cite{hou2022graphmae}, GraphMAE2~\cite{hou2023graphmae2}, and MaskGAE~\cite{li2023s}).
For all representation learning methods, including Node4All, we use the same one-hidden-layer MLP predictor to isolate the quality of the learned representations.

We use 25 node classification benchmarks covering diverse domains, feature specifications, and structural properties, with label homophily ranging from 0.06 to 0.84 and average degree from 4.03 to 89.28.
Dataset details are provided in Appendix~\ref{appendix:dataset_details}.
Public data splits are used when available; otherwise, following~\cite{zhaofully}, we sample 20 nodes per class for training and evenly split the remaining nodes into validation and test sets.
\jm{For datasets with random splits, we select hyperparameters based on 5 independent splits, while test accuracy is reported as the mean and the standard deviation over a broader range of 10 splits.}

\mypar{Results.}
Due to the large number of benchmarks and baselines, we report a summary in Figure~\ref{fig:hp_box_plot}; full results are available on the project website.\footnote{\url{https://github.com/dooho00/node4all/blob/main/docs/full_exp_results.pdf}}
We evaluate baselines under two settings, \textbf{Fixed-HP} and \textbf{Tuned-HP}, to disentangle the effects of \emph{(i)} per-dataset training and \emph{(ii)} per-dataset hyperparameter tuning.
All baselines are trained separately on each target dataset, whereas Node4All is pretrained once on synthetic graphs and applied to all datasets as is, without \emph{any dataset-specific training or hyperparameter tuning.}

\jm{In \textbf{Fixed-HP}, the results are shown as the box plots in Figure \ref{fig:hp_box_plot}.}
Each method uses a single hyperparameter configuration selected to maximize its average validation accuracy across all datasets.
This setting isolates the effect of per-dataset training and tests whether a method admits a generally effective configuration that transfers across datasets.
Under this setting, Node4All achieves the highest average accuracy among all 21 baselines.
At the level of individual datasets, this result indicates that the knowledge captured through synthetic pretraining can match, and even surpass, the knowledge obtained through direct supervision or self-supervised learning on target graphs.
At the cross-dataset level, the results show that a single Node4All model can generalize across diverse graph datasets by learning from large collections of synthetic graphs, outperforming methods trained separately on each dataset.
Overall, these findings suggest that when extensive per-dataset hyperparameter tuning is not affordable, Node4All provides a strong and reliable alternative to conventional per-dataset training.

\jm{In \textbf{Tuned-HP}, the results are visualized as the diamond markers in Figure \ref{fig:hp_box_plot}.}
Hyperparameters are selected independently for each dataset, allowing each method to benefit from dataset-specific hyperparameter choices. Since Node4All does not perform any per-dataset hyperparameter tuning, its performance remains the same as in \textbf{Fixed-HP.}
Under this setting, graph \acp{SSL} methods improve their average accuracy by 1.92\%, and supervised \acp{GNN} baselines by 1.11\%, while Node4All’s rank shifts from 1st to 5th among 21 baselines.
This highlights a key property of existing methods: their strongest results typically emerge only after extensive per-dataset tuning and training, and a single broad configuration is often insufficient.
The cost of this specialization is reflected in strong hyperparameter sensitivity, as illustrated in \Cref{fig:hp_box_plot}, which shows substantial variation in average performance across hyperparameter configurations. In contrast, Node4All employs a single fixed model, requires no per-dataset optimization, and yet remains competitive even against heavily tuned baseline methods, making it a more robust and practically favorable choice.

\begin{table}[t]
\centering
\caption{One-shot learning performance (\%) compared with feature-aligned \acp{GFM} across 5 datasets.}
\label{tab:one-shot}
\vspace{0mm}
\resizebox{\linewidth}{!}{%
\begin{tabular}{lccccc}
\toprule
\textbf{Method} &
\text{Cora} &
\text{Citeseer} &
\text{Pubmed} &
\text{AmzPhoto} &
\text{AmzComp} \\
\midrule

{GCN} & $29.53_{\pm7.56}$ & $26.29_{\pm6.50}$ & $23.32_{\pm11.56}$ & $26.96_{\pm12.94}$ & $24.40_{\pm5.62}$ \\

{GraphPrompt} & $28.26_{\pm12.68}$ & $32.51_{\pm8.73}$ & $47.47_{\pm9.15}$ & $48.11_{\pm9.89}$ & $42.82_{\pm11.67}$ \\

{GPF} & $32.17_{\pm6.56}$ & $36.79_{\pm7.70}$ & $41.28_{\pm8.14}$ & $47.47_{\pm8.19}$ & $35.75_{\pm7.12}$ \\

{Hassani} & $33.35_{\pm6.93}$ & $33.66_{\pm7.24}$ & $39.87_{\pm8.16}$ & $48.48_{\pm7.07}$ & $39.99_{\pm7.91}$ \\

{GCOPE} & $35.62_{\pm11.93}$
& $38.33_{\pm9.28}$ & $45.38_{\pm9.87}$ & $52.87_{\pm9.19}$ & $45.65_{\pm10.69}$ \\

{SAMGPT} & $\underline{47.80_{\pm11.88}}$
& $36.38_{\pm9.10}$ & $\underline{50.65_{\pm8.93}}$ & $\underline{58.71_{\pm8.69}}$ & $\underline{48.22_{\pm8.17}}$ \\

\midrule
{Ridge Reg.} 
& $30.54_{\pm10.63}$
& $\underline{39.83_{\pm8.15}}$ & \textcolor{Red}{$\mathbf{52.42_{\pm8.09}}$} & $52.42_{\pm8.09}$ & $38.58_{\pm8.87}$ \\

\textbf{+ Node4All} & \textcolor{Red}{$\mathbf{51.79_{\pm12.99}}$}
& \textcolor{Red}{$\mathbf{44.08_{\pm8.94}}$} & $38.76_{\pm10.51}$ & \textcolor{Red}{$\mathbf{58.97_{\pm9.69}}$} & \textcolor{Red}{$\mathbf{49.72_{\pm10.64}}$} \\

\bottomrule
\end{tabular}
\vspace{-6mm}
}

\end{table}

\subsection{One-Shot Learning}

One-shot learning is a common evaluation setting for \acp{GFM}, where only a single labeled example per class is provided.
It is designed to assess whether knowledge pretrained on other datasets can be effectively adapted to a new graph dataset under minimal supervision.
We use this setting to evaluate the effectiveness of Node4All representations under extremely limited supervision and compare them against recent feature-aligned \acp{GFM}, which explicitly aim to support cross-dataset adaptation in this setting.

\mypar{Setup.}
We follow the evaluation protocol of SAMGPT~\cite{yu2025samgpt} and use their reported numbers on five datasets.
Baselines include supervised GCN~\cite{kipf2017semisupervised}, feature-aligned \acp{GFM} (GCOPE~\cite{zhao2024gcope}, SAMGPT~\cite{yu2025samgpt}), graph cross-domain adaptation model (Hassani~\cite{hassani2022cross}), and graph prompting frameworks (GPF~\cite{fang2023universal}, GraphPrompt~\cite{liu2023graphprompt}).
Node4All is applied to all datasets without any dataset-specific modification, as in the main experiment.
To adapt the node representations to the labels with only one-shot supervision, we first reduce their dimensionality to 32 using truncated SVD and then train a lightweight ridge regressor with a regularization strength $\alpha=10.0$.
This procedure is applied uniformly across all datasets.

\mypar{Results.}
We report the accuracy in \Cref{tab:one-shot}.
Node4All outperforms GCOPE and SAMGPT, which are feature-aligned \acp{GFM}, on 4 out of 5 datasets, as well as other baselines.
These results demonstrate that Node4All remain effective even under extreme supervision scarcity.
Importantly, while the \ac{GFM} baselines rely on co-training across multiple real-world datasets, Node4All is trained solely on synthetic graphs.
These findings support our claim that training on well-designed synthetic graphs can generalize more effectively than training on a limited collection of real-world datasets.

\begin{table*}[t]
\centering
\caption{In-context learning performance (\%) compared with fully inductive \acp{GFM} across 11 datasets.
The last column reports the average rank (lower is better), with \textcolor{Red}{\textbf{best}} and \underline{second-best} results highlighted for each dataset.}
\label{tab:in-context-learning}
\vspace{-2mm}
\resizebox{\linewidth}{!}{%
\begin{tabular}{lccccccccccc|c}
\toprule
\textbf{Method} & \text{Actor} &
\makecell{Amazon\\Photo} & \makecell{Amazon\\Computer} &
 \text{Cornell} & \text{DBLP}&
\text{Deezer} & \makecell{Mine-\\sweeper} & \text{Pubmed} & \text{Tolokers} & \text{Texas} & \text{Wisconsin} & \makecell{\textbf{Avg.}\\\textbf{Rank}}\\
\midrule

{GraphAny} & $29.51_{\pm0.55}$ & $\underline{90.64_{\pm0.82}}$ & $\underline{83.04_{\pm1.24}}$  & $66.49_{\pm1.48}$ & $66.49_{\pm1.48}$ & $52.13_{\pm3.02}$ & $80.46_{\pm0.15}$& $77.46_{\pm0.30}$ & $78.24_{\pm0.03}$ & $73.52_{\pm2.96 }$& $71.77_{\pm5.98}$ & 3.90 \\

{TS-GAT} & $28.80_{\pm2.12}$ & $89.66_{\pm1.23}$ & $80.23_{\pm2.44}$  & $71.35_{\pm4.52}$ & $64.30_{\pm3.75}$ & $52.88_{\pm2.86}$ & \textcolor{Red}{$\mathbf{80.72_{\pm0.78}}$}& $75.46_{\pm0.86}$ & $78.01_{\pm0.23}$ & $ 74.05_{\pm4.10}$& $ 65.88_{\pm9.86}$ & 4.27 \\

{TS-Mean} & $28.09_{\pm0.93}$ & $ 90.18_{\pm1.30}$ & $ 81.37_{\pm1.25}$  & $68.65_{\pm2.42}$ & $66.42_{\pm3.65}$ & $52.31_{\pm2.52}$ & 
$\underline{80.68_{\pm0.38}}$& $74.98_{\pm0.56}$ & $78.12_{\pm0.09}$ & $73.51_{\pm4.01}$& $61.18_{\pm11.38}$ & 4.63 \\

{NodePFN} & $\underline{32.99_{\pm1.09}}$ & $90.53_{\pm0.13}$ & $81.42_{\pm0.48}$  & $\underline{71.89_{\pm2.76}}$ & $\underline{74.71_{\pm0.39}}$ & $53.45_{\pm0.65}$ & $80.66_{\pm0.25}$& $\underline{78.00_{\pm0.24}}$ & $\underline{78.61_{\pm0.06}}$ & $76.22_{\pm7.53}$& $79.22_{\pm6.97}$ & $\underline{2.54}$ \\

\midrule

{TabPFN} & $32.93_{\pm1.71}$ & $80.78_{\pm1.63}$ & $67.02_{\pm2.00}$ & \textcolor{Red}{$\mathbf{74.05_{\pm5.57}}$} & $53.46_{\pm2.53}$ & \textcolor{Red}{$\mathbf{55.10_{\pm2.48}}$} & $80.00_{\pm0.00}$& $72.20_{\pm0.00}$ & $78.36_{\pm0.24}$ & \textcolor{Red}{$\mathbf{82.08_{\pm5.16}}$}& \textcolor{Red}{$\mathbf{86.08_{\pm3.97}}$} & 3.63\\

\textbf{+ Node4All} & \textcolor{Red}{$\mathbf{33.18_{\pm1.15}}$} & \textcolor{Red}{$\mathbf{91.70_{\pm0.65}}$} & \textcolor{Red}{$\mathbf{83.59_{\pm0.73}}$} & $66.76_{\pm7.05}$ & 
\textcolor{Red}{$\mathbf{77.67_{\pm1.54}}$} & $\underline{55.06_{\pm1.30}}$&
$80.36_{\pm0.36}$& 
\textcolor{Red}{$\mathbf{80.80_{\pm0.00}}$}& 
\textcolor{Red}{$\mathbf{81.15_{\pm0.61}}$} & $\underline{76.88_{\pm5.40}}$ & $\underline{82.35_{\pm3.62}}$ & \textcolor{Red}{$\mathbf{2.00}$}\\

\bottomrule
\end{tabular}
}
\end{table*}

\subsection{In-Context Learning}

In-context learning refers to making predictions without updating model parameters, instead using a small set of labeled examples as context.
For \acp{GFM}, this setting evaluates whether pretrained knowledge from other datasets can be effectively applied to unseen datasets by conditioning on the provided context.
We use this evaluation to assess whether Node4All representations support in-context adaptation and compare them against fully inductive \acp{GFM}, which are designed explicitly for in-context learning on graphs.

\mypar{Setup.}
We compare against three recent fully inductive \acp{GFM}: GraphAny~\cite{zhaofully}, TSNet~\cite{finkelshtein2025equivarianceeverywhere}, and NodePFN~\cite{anonymous2026learning}.
We follow the shared evaluation protocol when evaluating Node4All, while baseline results are taken directly from the original papers.

Since Node4All does not natively support in-context learning, we pair it with TabPFN~\cite{hollmann2022tabpfn, hollmann2025accurate}, a tabular foundation model, as an in-context predictor.
Node representations produced by Node4All are provided to TabPFN together with the labeled nodes as context.
This Node4All-TabPFN pipeline enables in-context predictions on arbitrary graphs in a single inference pass, without any training or dataset-specific modification, thereby forming a fully inductive \ac{GFM}.
Due to current limitations of TabPFN on feature dimensionality and the number of context samples, we evaluate this setup on 11 datasets that satisfy these constraints.

\mypar{Results.}
We report the results in \Cref{tab:in-context-learning}.
Node4All achieves the best average rank of 2.0 and the top performance on 6 datasets.
These results show that Node4All representations are highly effective for in-context learning, transforming the structure-agnostic TabPFN into a graph foundation model and outperforming state-of-the-art fully inductive \acp{GFM}.
Notably, NodePFN, the second-best method, requires per-dataset data preprocessing, including the PCA dimensionality and propagation depth, making our strong performance particularly remarkable given that it requires no dataset-specific modification.
\jm{Moreover, Node4All with TabPFN generally outperforms Node4All with the MLP predictor reported in the main experiment, highlighting the promise of this combination.}

\section{Additional Analysis}

In this section, we present further analyses of Node4All, examining its computational complexity, representational behavior via visualization, and the effects of its core design choices.

\subsection{Complexity Analysis}
\label{subsec:scalability}

\mypar{Theoretical.}
The computational cost of Node4All is dominated by CGT, which includes multi-hop tokenization followed by Transformer pooling.
Let $d$ be the token dimension, $(L+1)$ the token sequence length, and $R$ the number of layers.
The computational cost decomposes into multi-hop tokenization
$\mathcal{O}(L(|E|d^2 + Nd^2))$,
Transformer attention $\mathcal{O}(RN(L+1)^2d)$,
and feed-forward layers $\mathcal{O}(RN(L+1)d^2)$, all of which are linear with the number of nodes $N$ and edges $|E|$.
Since this cost is incurred per channel graph, CGT scales linearly with the input feature dimensionality $F$.

\mypar{Empirical.}
On an NVIDIA RTX A6000, training Node4All takes about 10 minutes for 500 iterations over 500 synthetic graphs, and inference across 25 benchmark graphs takes 672 seconds.
The main bottleneck is that CGT processes all $F$ feature channels separately, without compressing them into a hidden dimension $H$ as in standard \acp{GNN}, leading to higher inference time on datasets with large feature dimensionality, such as Coauthor Physics (238\,s), FullCora (103\,s), and BlogCatalog (90\,s) with even $H > 8000$.
Nevertheless, runtime scales linearly with $F$ and remains practical, with inference time of several minutes.
Moreover, Node4All offers two efficiency advantages:
\emph{(i)} representations can be cached and reused, and
\emph{(ii)} no dataset-specific hyperparameter search is required, which often dominates the overall cost of existing methods.

\begin{figure}[t]
\centering

\includegraphics[width=\linewidth]{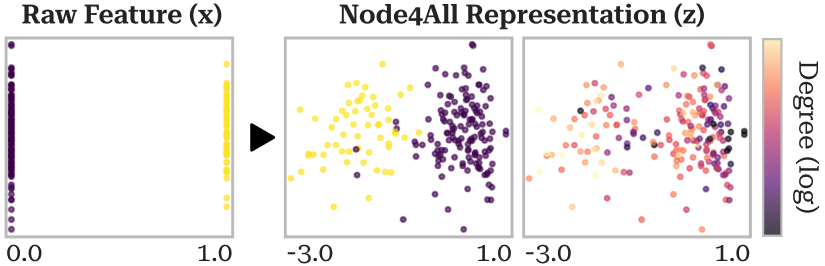}
\vspace{-4mm}
\caption{
Scatter plots of raw features (left) and Node4All representations (middle, right) for the first feature channel of Amazon Photo, with nodes colored by degree (right).
}
\label{fig:channel-wise}
\vspace{-6mm}
\Description{Autoencoder}
\end{figure}

\subsection{Visualization}
\label{subsec:viz}

While recent fully inductive \acp{GFM} can operate across arbitrary datasets, it remains unclear what kind of knowledge enables such broad generalization.
In this section, we use visualizations to examine how Node4All behaves and to explain why this mechanism supports generalization across datasets.

\mypar{Channel-wise Processing.}
Node4All processes each feature channel independently, which allows us to directly inspect how an individual channel changes before and after graph-aware encoding.
In \Cref{fig:channel-wise}, each node appears as a point, with its scalar feature value on the horizontal axis, while the vertical axis is included for visualization.
For binary features, many nodes initially share identical values (0 or 1; left).
After encoding, Node4All maps these nodes to distinct representations while preserving the original two-group separation (middle), and the resulting ordering further correlates with node degree (right).
These results show how Node4All injects structural information into each node group that shares their features.
We observe the same behavior consistently for categorical and continuous features, as shown in Appendix~\ref{appendix:visualize}.

\mypar{Representation Space.}
We next visualize the full node representations using t-SNE~\cite{maaten2008visualizing}.
When colored by the first principal component of the raw features, the overall feature-driven structure is largely preserved in the learned representations.
When colored by node degree, clear structural patterns emerge in the Node4All representation space that are not visible in the raw features.
Together, these observations suggest that the channel-wise structural effects introduced by Node4All accumulate coherently in the final representation space.
Importantly, this mechanism---distinguishing nodes based on graph structure while preserving original feature information---relies on generic structural cues rather than dataset-specific semantics, which helps explain why Node4All generalizes effectively across diverse graph datasets.

\begin{figure}[t]
\centering
\vspace{-1mm}
\includegraphics[width=\linewidth]{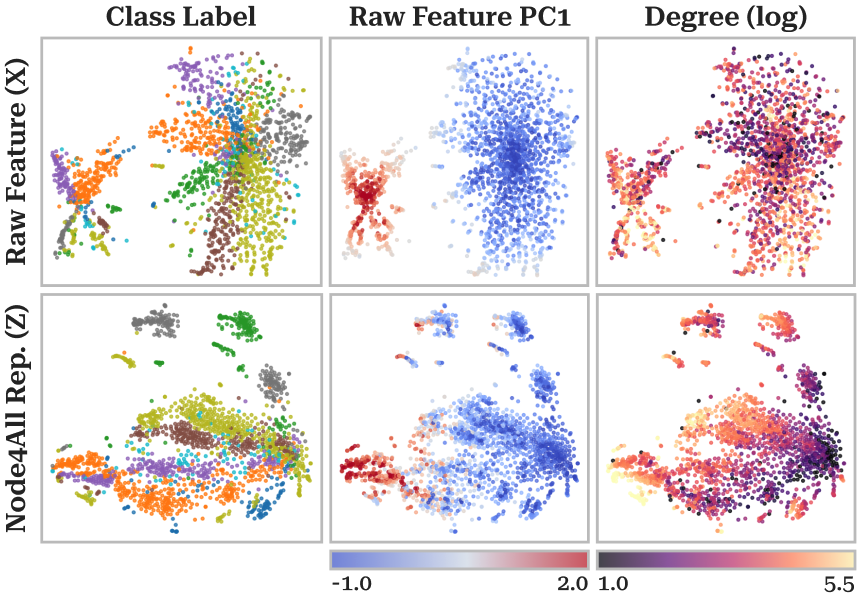}
\vspace{-6mm}
\caption{
t-SNE visualizations of raw features and Node4All representations on Amazon Photo, colored by class labels (left), raw-feature PC1 (middle), and node degree (right).
}
\label{fig:tsne}
\vspace{-2mm}
\Description{Autoencoder}
\end{figure}

\subsection{Ablation Study}
\label{subsec:ablation}
Due to space limitations, detailed ablation results are provided in Appendix~\ref{appendix:ablations}. 
Replacing the backbone architecture, CGT with a GCN reduces average accuracy to 37.80\%, far below the Node4All model (71.55\%), highlighting the necessity of CGT for channel-wise graph processing. 
Training on real-world datasets instead of synthetic graphs yields lower performance, with OGBN-Arxiv achieving the best average accuracy (69.94\%), comparable to mid-ranked baselines (9th of 21). 
This result indicates that Node4All’s architecture and masked autoencoding form an effective framework for cross-dataset representation learning, while synthetic graph pretraining drives strong generalization to arbitrary graph datasets.
\section{Discussion}

In this section, we discuss the inductive bias of Node4All, as well as its current limitations and directions for future research.

\subsection{Inductive Bias}

CGT processes each feature channel independently, enabling it to handle arbitrary feature dimensionalities. Beyond this flexibility, the channel-wise design promotes transferability across datasets by avoiding the explicit modeling of cross-feature interactions. While individual feature channels often carry transferable semantic information, the ways in which features interact to predict labels can vary substantially across datasets and label spaces. Capturing such interactions during pretraining may therefore cause representations to become specialized to particular datasets and label spaces, limiting their ability to generalize across datasets. Consistent with this view, explicitly modeling cross-feature interactions through linear self-attention reduces the average performance of Node4All across the 25 benchmark datasets to 70.78\%. This result supports our design choice of preserving channel identities while learning structural patterns during pretraining and delegating the modeling of cross-feature interactions to downstream predictors.

Our learning framework further reinforces this inductive bias. The feature reconstruction objective encourages the preservation of feature semantics, while training on diverse synthetic graphs promotes generalization across varying graph structures and feature spaces. Together, these design choices enable Node4All to learn transferable representations across diverse graph datasets.

\subsection{Limitations and Future Work}

\mypar{Synthetic graph generation.}
The primary goal of this work is to establish the feasibility of dataset-agnostic graph representation learning. Consequently, our synthetic graph generation framework remains relatively simple. An important direction for future work is to explore richer graph generation strategies that capture a broader range of structural patterns. For example, stochastic block model-based generators~\cite{karrer2011stochastic} could provide more realistic community structures, while motif-aware generation strategies~\cite{benson2016higher} could introduce higher-order patterns commonly observed in real-world graphs. By generating graphs whose structural properties more closely resemble those of real-world datasets, such approaches may yield more transferable and robust node representations.

\mypar{Beyond node classification.}
This work focuses exclusively on node classification tasks. Extending dataset-agnostic representation learning to graph-level tasks involving small graphs, such as molecular property prediction, as well as edge-level tasks, is an important direction toward more general graph representation learners. Supporting these prediction settings may require architectural adaptations and pretraining objectives tailored to different prediction levels. For example, edge-level tasks may benefit from objectives that explicitly capture pairwise relationships and local connectivity patterns, while graph-level tasks may require mechanisms that better aggregate graph-wide information.
\section{Conclusion}

In this work, we introduce Node4All, a single node representation learner for arbitrary real-world graph datasets, without requiring any dataset-specific training or hyperparameter tuning.
Node4All differs from existing methods in both its architecture and learning framework. Architecturally, we propose the Channel Graph Transformer (CGT), a novel channel-wise graph processing architecture that accommodates heterogeneous feature dimensionalities across datasets within a single fixed model. In terms of learning, unlike conventional graph representation learning methods that are optimized separately for each target dataset, Node4All is trained once, entirely on synthetic graphs. 
Through carefully designed synthetic graph generation that spans a diverse range of graphs, Node4All learns representations that generalizable across a wide range of graph datasets. Empirically, a single Node4All model achieves competitive performance against per-dataset optimized methods and even outperforms recent graph foundation models in one-shot and in-context learning. Overall, Node4All advances node representation learning beyond dataset-bounds.


\begin{acks}
{\small 
This work was partly supported by the National Research Foundation of Korea (NRF) grant funded by the Korea government (MSIT) (RS-2024-00341425 and RS-2024-00406985), ``Advanced GPU Utilization Support Program'' funded by the Government of the Republic of Korea (Ministry of Science and ICT), and the New Faculty Startup Fund from Seoul National University.
Jaemin Yoo is the corresponding author.
}
\end{acks}

\bibliographystyle{ACM-Reference-Format}
\bibliography{sample-base}


\appendix

\section{Limitations of Random Node Assignment}
\label{appendix:lemma}
A key challenge in synthetic graph generation is producing node features that reflect the underlying graph structure. A natural approach is to assign features randomly and independently to each node. However, because this process ignores the graph topology, the resulting features contain little information about structural differences between neighboring nodes.

This limitation can be characterized spectrally. High-frequency graph components capture localized variations induced by the graph structure, yet random feature assignment contributes only limited energy to these components. The following lemma formalizes this observation and motivates our use of random propagation to inject graph-dependent structure into the generated features.

\begin{lemma}
\label{lem:small_highfreq_energy}
Let $\mX \in \mathbb{R}^{N \times F}$ be node features generated independently of the graph structure, and let $\mA \in \mathbb{R}^{N \times N}$ denote the adjacency matrix.
Then, in expectation, at most a fraction of $k / N$ of the total feature energy lies in any $k$-dimensional high-frequency eigenspace of the graph Laplacian.
\end{lemma}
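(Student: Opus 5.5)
Since $\mX$ is generated independently of $\mA$, we may fix the graph and argue conditionally on $\mA$. We interpret ``generated independently of the graph structure'' as the node features being exchangeable over node identities. The Laplacian ($\mL=\mI-\hat{\mA}$ or $\mD-\mA$) is symmetric, so it has an orthonormal eigenbasis $\{\mathbf{u}_j\}_{j=1}^N$. Let $\mathcal{U}_k$ be any $k$-dimensional eigenspace (for instance, the span of the $k$ eigenvectors with largest eigenvalues), and let $\mP_k=\mathbf{U}_k\mathbf{U}_k^\top$ be the orthogonal projector onto it. The energy of $\mX$ in $\mathcal{U}_k$ is $\|\mP_k\mX\|_F^2=\sum_{f=1}^F \vx^{(f)\top}\mP_k\vx^{(f)}$. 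The total energy is $\|\mX\|_F^2$. The plan is to show $\E\|\mP_k\mX\|_F^2\le \frac{k}{N}\E\|\mX\|_F^2$ by computing the second-moment matrix of each channel.

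\textbf{Step 1.} Fix a channel $\vx=\vx^{(f)}$ and set $\Sigma=\E[\vx\vx^\top]$. Then $\E[\vx^\top\mP_k\vx]=\mathrm{tr}(\mP_k\Sigma)$ and $\E\|\vx\|^2=\mathrm{tr}(\Sigma)$. Exchangeability forces $\Sigma=a\mI+b\1\1^\top$, where $a=\mathrm{Var}$-type diagonal minus off-diagonal and $b$ is the common off-diagonal second moment. We need $a\ge 0$, which holds because $\Sigma\succeq 0$ on $\1^\perp$; in the i.i.d.\ case, $b=(\E x)^2\ge 0$. Hence
\[
\mathrm{tr}(\mP_k\Sigma)=ak+b\,\|\mP_k\1\|^2 .
\]

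\textbf{Step 2.} Handle the constant direction. For the standard Laplacian, $\1$ is the zero-frequency eigenvector. For the normalized Laplacian the bottom eigenvector is $\mD^{1/2}\1$, which still has a large overlap with $\1$. A high-frequency eigenspace therefore excludes it (or is orthogonal to it), so $\mP_k\1=0$ for the combinatorial Laplacian. In general, I would assume the features are centered, or equivalently measure energy after removing the mean, so that $b=0$. This gives $\E[\vx^\top\mP_k\vx]=ak$. For the total energy, $\mathrm{tr}(\Sigma)=aN+bN\ge aN$.

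\textbf{Step 3.} Sum over channels, which gives $\E\|\mP_k\mX\|_F^2=k\sum_f a_f\le \frac{k}{N}\sum_f \mathrm{tr}(\Sigma_f)=\frac{k}{N}\E\|\mX\|_F^2$. With a centered isotropic model ($\Sigma=\sigma^2\mI$), this is exactly $\frac{k}{N}$. The fraction is a ratio of expectations, which I would state explicitly. If a statement about the expected ratio is preferred, the i.i.d.\ Gaussian case gives it exactly by rotation invariance: $\mathbf{U}_k^\top\vx$ is then a uniformly random projection, so the expected ratio is $k/N$ (Beta distribution).

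\textbf{Main obstacle.} The delicate step is the nonzero-mean component, together with the precise meaning of ``independent of the graph.'' A mean offset concentrates energy on $\1$, which is low-frequency, and so only helps the bound. However, for the normalized Laplacian $\1$ is not exactly an eigenvector, so $\|\mP_k\1\|^2$ may be slightly positive. I would first try restricting to centered features, or to the combinatorial Laplacian where $\mP_k\1=0$ exactly. Otherwise I would bound $b\|\mP_k\1\|^2$ using that $\1$ is close to $\mD^{1/2}\1/\|\cdot\|$, and absorb the small correction.
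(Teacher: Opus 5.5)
Your Steps 1 and 3 are the paper's computation, in slightly more general form. Under the paper's generator, $\E[\mX\mX^\top]=\bar F\,\E[\sigma^2]\,\mI+\bar F\,\E[\mu^2]\,\1\1^\top$. This is the channel-summed version of your $a\mI+b\1\1^\top$, with $b\ge 0$, and it is traced against the projector in the same way. The two routes part ways on the $\1\1^\top$ term. The paper bounds $\1^\top\mP_H\1\le N$ and never uses that the subspace is high-frequency. It reaches $k/N$ only because it writes the total energy as $\bar F\,\E[\sigma^2]N+\bar F\,\E[\mu^2]N^2$. Since $\mathrm{tr}(\1\1^\top)=N$, the correct total is your $(a+b)N$. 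With that denominator the ratio is $\bigl(ak+b\|\mP_k\1\|^2\bigr)/\bigl((a+b)N\bigr)$. So for $b>0$ the bound $k/N$ holds exactly when $\|\mP_k\1\|^2\le k$. You located the crux correctly, and the paper's proof only appears to get past it.

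The step that fails is your fallback for the normalized Laplacian $\mL=\mI-\mD^{-1/2}\mA\mD^{-1/2}$, which is the one the paper uses. The correction $b\|\mP_k\1\|^2$ need not be small, and no argument can absorb it, because the statement is false there.
\begin{itemize}
\item Take a star with center $0$ and $N-1$ leaves, and let $\1_{\mathrm{leaf}}$ be the indicator of the leaves.
\item The vector $u=(\mathbf{e}_0-\1_{\mathrm{leaf}}/\sqrt{N-1})/\sqrt2$ satisfies $\mD^{-1/2}\mA\mD^{-1/2}u=-u$. So $u$ spans the top eigenspace of $\mL$ (eigenvalue $2$), and $(\1^\top u)^2=(\sqrt{N-1}-1)^2/2$.
\item Under the paper's generator, $\E[\sigma^2]=\E[z^2]\,\E[\mu^2]=1=\E[\mu^2]$, so the channel-summed $a$ and $b$ both equal $\bar F$.
\item With $k=1$, the ratio of expectations is $\bigl(1+(\sqrt{N-1}-1)^2/2\bigr)/(2N)$. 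This equals $3/20>1/10$ at $N=10$ and tends to $1/4$.
\end{itemize}
So your remark that a mean offset ``only helps the bound'' is wrong for this Laplacian. On degree-heterogeneous graphs the constant vector can sit largely in the highest-frequency eigenvector. A bound that ignores where the subspace lies cannot be right either: for $\mP=\1\1^\top/N$ the true ratio is $(a+bN)/((a+b)N)$, not $1/N$.

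What your argument does prove is a corrected lemma, in either of two forms:
\begin{itemize}
\item For the combinatorial Laplacian $\mD-\mA$, with $\mathcal{U}_k$ spanned by eigenvectors of positive eigenvalue, $\mP_k\1=0$ and the ratio is $ak/((a+b)N)\le k/N$.
\item For entries that are conditionally i.i.d.\ with mean zero, $b=0$ and the ratio is exactly $k/N$ for every $k$-dimensional subspace.
\end{itemize}
Two small repairs are needed. First, Step 3 needs $b\ge 0$, which exchangeability alone does not supply. A uniformly random permutation of a fixed centered vector has $b=-a/N$ and gives only $k/(N-1)$. So assume conditionally i.i.d.\ entries, as the paper's generator provides. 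Second, subtracting the empirical mean does not make $b=0$. It yields covariance $a(\mI-\tfrac1N\1\1^\top)$ and only the bound $k/(N-1)$.
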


\begin{proof}
Let $\mL=\mI-\mD^{-1/2}\mA\mD^{-1/2}$ and let $\mP_H$ be an orthogonal projector onto a $k$-dimensional high-frequency eigenspace of $\mL$.
We generate $\mX=\mS\mW$, where $\mW\in\mathbb{R}^{\bar F\times F}$ has orthonormal rows ($\mW\mW^\top=\mI_{\bar F}$), and
for each factor $j\in[\bar F]$ we sample
\[
\mu_j\sim\mathcal{N}(0,1),\qquad z_j\sim\mathcal{N}(0,1),\qquad \sigma_j = |z_j\mu_j|,
\]
then draw node-wise entries independently as
\[
S_{u,j}\mid(\mu_j,\sigma_j)\sim\mathcal{N}(\mu_j,\sigma_j^2),\qquad u\in[N].
\]
Since $\mW\mW^\top=\mI_{\bar F}$, we have $\mX\mX^\top=\mS\mS^\top$.
Moreover, for $u\neq v$,
\[
\mathbb{E}[S_{u,j}S_{v,j}]
=
\mathbb{E}[\mu_j^2],
\qquad
\mathbb{E}[S_{u,j}^2]
=
\mathbb{E}[\sigma_j^2]+\mathbb{E}[\mu_j^2],
\]
hence
\[
\mathbb{E}[\mX\mX^\top]
=
\mathbb{E}[\mS\mS^\top]
=
\bar F\,\mathbb{E}[\sigma^2]\,\mI_N
+
\bar F\,\mathbb{E}[\mu^2]\,\mathbf{1}\mathbf{1}^\top.
\]
Therefore,
\[
\mathbb{E}\bigl[\|\mP_H\mX\|_F^2\bigr]
=
\mathrm{tr}\!\bigl(\mP_H\,\mathbb{E}[\mX\mX^\top]\bigr)
=
\bar F\,\mathbb{E}[\sigma^2]\mathrm{tr}(\mP_H)
+
\bar F\,\mathbb{E}[\mu^2]\mathbf{1}^\top \mP_H \mathbf{1}.
\]
Using $\mathrm{tr}(\mP_H)=k$ and $0\preceq \mP_H \preceq \mI$ (so $\mathbf{1}^\top \mP_H \mathbf{1}\le \mathbf{1}^\top \mI \mathbf{1}=N$), we obtain
\[
\mathbb{E}\bigl[\|\mP_H\mX\|_F^2\bigr]
\le
\bar F\,\mathbb{E}[\sigma^2]\,k
+
\bar F\,\mathbb{E}[\mu^2]\,N.
\]
Also,
\[
\mathbb{E}\bigl[\|\mX\|_F^2\bigr]
=
\mathrm{tr}\!\bigl(\mathbb{E}[\mX\mX^\top]\bigr)
=
\bar F\,\mathbb{E}[\sigma^2]\,N
+
\bar F\,\mathbb{E}[\mu^2]\,N^2.
\]
Hence for $k\ge 1$,
\[
\frac{\mathbb{E}[\|\mP_H\mX\|_F^2]}{\mathbb{E}[\|\mX\|_F^2]}
\le
\frac{\bar F\,\mathbb{E}[\sigma^2]\,k + \bar F\,\mathbb{E}[\mu^2]\,N}
{\bar F\,\mathbb{E}[\sigma^2]\,N + \bar F\,\mathbb{E}[\mu^2]\,N^2}
\le
\frac{k}{N},
\]
which proves the claim.
\end{proof}

\section{Ablation Studies}
\label{appendix:ablations}
\begin{table}[t]
\caption{Ablation study on the CGT.}
\vspace{-3mm}
\resizebox{0.9\linewidth}{!}{%
\begin{tabular}{lc}
\toprule
\textbf{Modification} &
\textbf{Total Avg. (25)} \\
\midrule
Use GAT instead CGT & $37.80_{\pm1.29}$  \\
Use GCN instead CGT & $37.61_{\pm0.92}$  \\
\midrule
Use GAT instead Multi-hop Tokenization & $68.92_{\pm1.34}$  \\
Use GCN instead Multi-hop Tokenization & $69.85_{\pm1.48}$  \\
Use MLP instead Transformer Readout & $68.14_{\pm2.34}$  \\
\midrule
\textbf{Node4All} & $\mathbf{71.55_{\pm1.38}}$\\
\bottomrule
\end{tabular}}
\vspace{-2mm}
\label{tab:ablation_architecture}
\end{table}

\begin{table}[t]
\caption{Average test accuracy (\%) across 25 datasets for CGT trained on real-world datasets with masked autoencoding.}
\vspace{-3mm}
\resizebox{0.9\linewidth}{!}{%
\begin{tabular}{lccc}
\toprule
\textbf{Dataset} &
\textbf{\# Nodes} &
\textbf{Total Avg. (25)} &
\makecell{\textbf{Rank @} \\\textbf{Baselines}}\\
\midrule
Wisconsin  & 251 & $60.39_{\pm3.58}$  & 20\\
Cora & 2,708 & $61.79_{\pm3.80}$  & 20 \\
Roman Empire  & 22,662 & $55.32_{\pm1.78}$  & 20 \\
Questions  & 48,921 & $69.58_{\pm1.60}$  & 9\\
OGBN-Arxiv  & 169,343 & $69.94_{\pm1.48}$  & 9\\
OGBN-Products  & 2,449,029	 & $68.82_{\pm1.36}$  & 10\\
\midrule
\textbf{Node4All} & - & $\mathbf{71.55_{\pm1.38}}$ & 5\\
\bottomrule
\end{tabular}}
\vspace{-2mm}
\label{tab:ablation_datasets}
\end{table}
While we provide a summary of the ablation study in \Cref{subsec:ablation}, we present detailed results and analyses here.

\mypar{Channel Graph Transformer (CGT).}
CGT is designed to process a \emph{channel graph}, where each node is associated with a single scalar feature.
As a baseline, we apply standard \acp{GNN} architectures (GCN and GAT) by lifting scalar features to a $d$-dimensional space, performing message passing, and projecting them back to scalars.
As shown in \Cref{tab:ablation_architecture}, this results in severe performance degradation, with average accuracies of 37.80\% and 37.61\%, respectively, even below a linear classifier (62.92\%).
These results confirm that learning on channel graphs requires a specialized architecture such as CGT.

CGT consists of two components: multi-hop tokenization and Transformer-based token readout (\Cref{sec:cgt}). 
Since removing either component makes the model ill-defined, we instead evaluate alternative designs. 
First, we replace dimension-wise tokenization with shared aggregation using GCN or GAT as token generators. 
This modification reduces performance to 69.85\% and 68.92\%, respectively, though the degradation is less severe than using \acp{GNN} backbones alone. 
Similarly, replacing Transformer-based aggregation with MLP pooling also degrades performance, but again to a lesser extent than using \acp{GNN} alone. 
Overall, these results highlight that the tokenization--aggregation paradigm is essential for learning on channel graphs, with the proposed CGT design achieving the best performance.

\mypar{Training on Real-world Datasets.}
While Node4All is trained on synthetic graphs to achieve broad generalization, training on real-world datasets is also possible. 
We evaluate CGT models trained on six real-world datasets (\Cref{tab:ablation_datasets}) and find that all of them underperform the synthetic-pretrained Node4All, highlighting the advantage of using synthetic graphs on pretraining. 

In details, CGT fails to learn meaningful representations on small datasets (e.g., Wisconsin, Cora, and Roman Empire), but achieves mid-ranked performance on larger datasets (e.g., Questions, OGBN-Arxiv, and OGBN-Products). 
However, larger size alone does not guarantee better results: despite being much larger than OGBN-Arxiv, OGBN-Products does not yield improved performance. 
These observations suggest that effective training depends more on structural and feature diversity than on dataset size alone.
Overall, these results show that pretraining on synthetic graph provides a controllable and effective way to induce diverse training signals.

\begin{figure}[t]
\centering
\includegraphics[width=\linewidth]{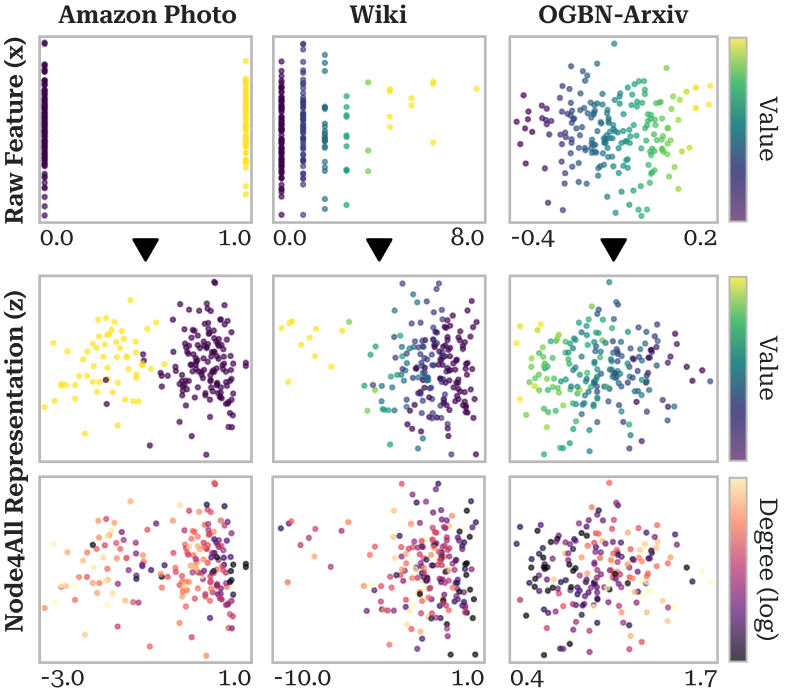}
\vspace{-6mm}
\caption{
Scatter plots of raw features (top) and Node4All representations (bottom) for the first feature channel on Amazon Photo, Wiki, and OGBN-Arxiv, representing binary, categorical, and continuous features.
}
\label{fig:channel-wise-extend}
\Description{Autoencoder}
\vspace{-4mm}
\end{figure}

\section{Additional Visualization}
As described in \Cref{subsec:viz}, we visualize how channel values change before and after processing by CGT. 
Here, we provide additional visualizations on datasets with different feature types—binary, categorical, and continuous—as shown in \Cref{fig:channel-wise-extend}.
\label{appendix:visualize}

\section{Dataset Details}
\label{appendix:dataset_details}
Our dataset selection and splitting strategy largely follows prior work~\citep{zhaofully}.
We obtain datasets from PyG~\citep{fey2019fast}, DGL~\citep{wang2019deep}, and the heterophilous graph datasets released by Yandex Research.\footnote{\url{https://github.com/yandex-research/heterophilous-graphs}}
All graphs are treated as undirected.
For the Chameleon and Squirrel datasets, we use the filtered versions provided in this repository, as the original datasets have been reported to contain issues~\citep{platonov2023a}.
Detailed statistics for each dataset are reported in project website.\footnote{\url{https://github.com/dooho00/node4all/blob/main/docs/dataset_stats.pdf}}


\end{document}